\newtheorem{theorem}{Theorem}
\newtheorem{proposition}[theorem]{Proposition}
    \definecolor{mygreen}{RGB}{50,200,50}
\title{VectorAdam for Rotation Equivariant Geometry~Optimization}
\author{%
  Selena Ling \\
  University of Toronto\\
  \texttt{selena.ling@mail.utoronto.ca} \\
  \And
  Nicholas Sharp \\
  University of Toronto\\
  \texttt{nsharp@cs.toronto.edu} \\
  \And
  Alec Jacobson \\
  University of Toronto, Adobe Research\\
  \texttt{jacobson@cs.toronto.edu} \\
}
\begin{document}
\maketitle
\begin{abstract}

The Adam optimization algorithm has proven remarkably effective for optimization problems across machine learning and even traditional tasks in geometry processing. At the same time, the development of equivariant methods, which preserve their output under the action of rotation or some other transformation, has proven to be important for geometry problems across these domains. 
In this work, we observe that Adam --- when treated as a function that maps initial conditions to optimized results --- is \emph{not} rotation equivariant for vector-valued parameters due to per-coordinate moment updates. This leads to significant artifacts and biases in practice.
We propose to resolve this deficiency with \emph{VectorAdam}, a simple modification which makes Adam rotation-equivariant by accounting for the vector structure of optimization variables. 
We demonstrate this approach on problems in machine learning and traditional geometric optimization, showing that equivariant VectorAdam resolves the artifacts and biases of traditional Adam when applied to vector-valued data, with equivalent or even improved rates of convergence.
\end{abstract}

\section{Introduction}
Over the last decades, gradient-based optimization has enabled rapid progress in machine learning and related fields. To tackle the problems caused by large variance of network weight gradients, researchers have developed adaptive variants of stochastic gradient descent such as Adam \cite{kingma_adam_2017}, which adaptively rescale the step size based on per-scalar gradient statistics to accelerate convergence.  

In the field of computer graphics and geometry processing, we also see an increasing use of Adam to optimize geometric energies. For example, the progress in differentiable rendering made derivatives on rendering parameters such as geometry, lighting and texture parameters amenable to these gradient-based optimization techniques. Two recent differentiable rendering works \cite{nimier2019mitsuba,li2018differentiable} default to Adam for their experiments. Recently, we also see the successful use of a customized Adam algorithm for optimizing geometric interpolation weights \cite{wang2021fast}. This work taps into this trend and takes a closer look at the use of Adam in geometric optimization problems in geometry processing and machine learning. 

% Unfortunately, Adam---typically applied to optimize individual network weights---does not account for the vector nature of geometric data and their gradients.
% In particular, we show that Adam updates depend on the coordinate system chosen to represent vector-valued data, even if the minimized loss function is coordinate invariant, which leads to undesirable artifacts and biases.

%%% Add definition of equivariance? 
Another important concern in geometric learning is the design of \emph{equivariant} functions, which have the property that rotating the input to the function causes the output to rotate in the same way. This property is natural for many geometric problems, where the choice of coordinate system is arbitrary. In this work we adopt the perspective that the optimization process is itself a function, mapping initial conditions to optimized results, and it is likewise expected that this function be equivariant. The widely-used Adam algorithm lacks this property, because its per-scalar statistics do not account for vector structure in the optimization variables. In particular, we show that Adam updates depend on the coordinate system chosen to represent vector-valued data, even if the minimized loss function is coordinate invariant. This leads to unexpected coordinate-dependent artifacts and biases, even when optimizing simple geometric regularization energies.

%In the context of geometry optimization, rotation equivariant optimization is desirable given rotation invariant geometric loss function, which means that given rotated input, the optimization step and the resulted output are predictable with the same rotation. Unfortunately, Adam---typically applied to optimize individual network weights---does not account for the vector nature of geometric data and their gradients, resulting in non-rotation-equivariant optimization steps. In particular, we show that Adam updates depend on the coordinate system chosen to represent vector-valued data, even if the minimized loss function is coordinate invariant, which leads to undesirable artifacts and biases.
%%%

%
We propose \emph{VectorAdam}, a solution that extends Adam's per-scalar operations to vectors.
We test VectorAdam on a number of machine learning and geometric optimization problems including geometry and texture optimization through differentiable rendering, adversarial descent, surface parameterization, and Laplacian smoothing. We find that like regular Adam, VectorAdam outperforms traditional first-order gradient descent, yet unlike regular Adam it is provably rotation equivariant.

%\pagebreak
\section{Related Work}

% set the stage for optimization
Many techniques have been proposed for gradient-based optimization in machine learning and in numerical computing more broadly (see \cite{sun2019survey} for a general survey).
We will primarily consider \emph{first order} methods, which take as input only the gradient of the objective function with respect to the unknown parameters, and produce as output updated parameters after the optimization step.
For large neural networks, loss gradients are often estimated stochastically on a subset of the loss function \cite{robbins1951stochastic}, however optimizers are generally agnostic to how the input gradients are computed. VectorAdam is no exception and works with stochastic samples or full gradients.
% Alec: wasn't 100% sure what this meant. Took a guess ↑
%our techniques will apply just the same to optimization over full gradients.

\paragraph{Numerical Optimization}
The most direct approach is to simply update the parameters directly via (stochastic) gradient descent, but this approach can be improved in many respects.
In numerical optimization literature, techniques like momentum~\cite{nesterov1983method} and line search~\cite{nocedal1999numerical} offer accelerated convergence over basic gradient descent.
More advanced techniques such as BFGS~\cite{fletcher1987practical,liu1989limited} and conjugate gradient~\cite{hestenes1952methods} make higher-order approximations of the objective landscape, while still taking only gradients as input.

% Machine learning likes hacky things.... and they work
\paragraph{Optimization in Machine Learning}
Machine learning, and in particular the training of deep neural networks, demands optimizers which have low computational overhead, scale to very high dimensional problems, avoid full loss computation, and naturally escape from shallow local minima.
Here, many optimizers have been developed following the basic pattern of tracking $O(1)$ additional data associated with each parameter to smooth and rescale gradients to generate updates ~\cite{duchi2011adaptive,zeiler2012adadelta,tieleman2012divide,kingma_adam_2017}.
All of these methods can be shown to have various strengths, but Adam~\cite{kingma_adam_2017} has emerged as a particularly popular black-box optimizer for machine learning.
The crux of our work is to identify a key flaw in Adam when optimizing geometric problems in machine learning and beyond, and to present a simple resolution.
We focus primarily on Adam due to its ubiquity and recent relevance in geometric optimization more broadly, but our core insights could be applied to many other gradient-based optimization schemes in much the same way.

\paragraph{Variants of Adam}
A number of variants have been proposed to improve Adam's performance in training neural networks. 
For example, NAdam~\cite{dozat2016incorporating} incorporates Nestorov momentum into Adam's formulation to improve rate of convergence, and RAdam~\cite{liu2019variance} introduces a term to rectify the variance of the adaptive learning rate, among many others.
Most similar to this work, Zhang et al.~\cite{zhang2017normalized} (also published as \cite{zhang_improved_2018}) propose ND-Adam, which preserves the direction of weight vectors during updates to improve generalization.
Like our approach, Zhang et al. explicitly leverage the vector structure of optimization variables. 
However, their approach does so for the sake of generalization and memory savings rather than equivariance, and it produces magnitude-only updates to vectors which are unsuitable for geometric problems where points must move freely through space.

% previous customized Adam in geometry problems 
\paragraph{Adam in Geometry}
Recent work in 3D geometry processing has leveraged Adam to great effect, adjusting the algorithm for the problem domain.
Nicolet et al.~\cite{nicolet_large_2021} proposed UniformAdam, which takes the infinity norm of the second momentum term, improving performance for differentiable rendering tasks. 
Wang et al.~\cite{wang2021fast} customized their own Adam optimizer by resetting the momentum terms regularly in the optimization process, outperforming traditional L-BFGS on a challenging benchmark. 
This work aims to further this trend of Adam as a general optimization tool by modifying it to have the rotation-equivariance properties which are fundamental to geometric computing.

% general variants of adam: Nadam, adammax, 

% Adam for geometry problems

\paragraph{Rotation Invariance and Equivariance}
This work contributes to the fruitful study of invariance and equivariance in geometric machine learning~\cite{bronstein2021geometric}.
When some transformation is applied to the input, an \emph{invariant} method produces the same output, while an \emph{equivariant} produces equivalently-transformed output.
In our context, one expects that if the input to a geometric optimization problem is rotated, the output will be identical up to the same rotation---the optimizer is equivariant.
Many novel invariant and equivariant layers, architectures, losses, and regularizers have been developed across geometric machine learning (see \cite{thomas2018tensor,weiler20183d,esteves2018learning,kondor2018clebsch,chen2019clusternet,poulenard2019effective,zhang2019rotation,fuchs2020se,li2021rotation,deng_vector_2021}, among many others).
In this work, we argue that \emph{optimizers} deserve the same treatment.

\section{Background and Motivation}
\subsection{Adaptive Moment Estimation (Adam)}
Given an objective function $f$ parameterized by $\theta$ and its gradient $g$, the first-order optimization algorithm Adam adaptively rescales gradients of its $t$th iteration using rolling estimations of first ($m$) and second moments ($v$) dictated by the following rules: for each scalar variable $\theta_i$,
\begin{equation}
\label{eqn:adam}
    \begin{split}
    g_i &\leftarrow \partial f / \partial \theta_i\\
    m_i &\leftarrow \beta_1 m_i + (1-\beta_1) g_i \\
    v_i &\leftarrow \beta_2 v_i + (1-\beta_2) g_i^2 \\
    \theta_i &\leftarrow \theta_i - \alpha\left.\left(\frac{m_i}{1 - \beta_1^t}\right) \middle/ \left(\sqrt{\frac{v_i}{1-\beta_2^t}} + \epsilon\right)\right.
    \end{split}
\end{equation}
where $\beta_1,\beta_2,\epsilon$ are Adam's hyperparameter constants\cite{kingma_adam_2017} and $\alpha$ is the specified learning rate.

%This adaptive moments-based gradient rescaling makes 
Adam's optimization step is invariant to any uniform re-scaling of the gradients' magnitudes. The gradient is adjusted take a value of roughly $\pm 1$, bounding the effective step size by the learning rate $\alpha$. This makes it particularly suitable for problems with gradients of large variance and unknown scaling, such as neural network optimization. 

\subsection{Problems in Geometry Optimization}

\begin{wrapfigure}[19]{R}{6cm}
%\centering
\includegraphics[width=0.5\textwidth,trim=0.3cm 0cm 0cm 1.3cm]{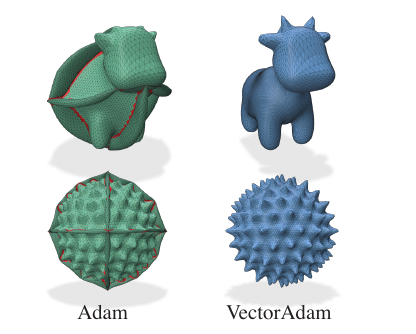}
\caption{
Geometric optimization results on two sample inputs arising in a differentiable rendering task.
Ordinary Adam results in axis-aligned artifacts, which are resolved by using VectorAdam instead.}
\label{fig:artifact}
\end{wrapfigure}

In recent years, we also see an increasing use of Adam in geometry 
\begin{wrapfigure}[7]{R}{4cm}
\centering
\includegraphics[width=0.2\textwidth,trim=0.0cm 0.0cm 0.0cm 0.6cm]{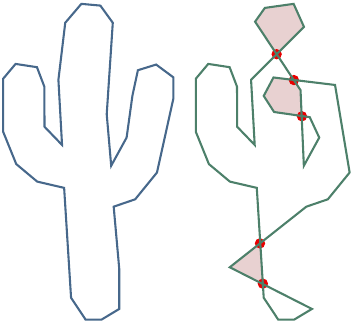}
\label{fig:self-intersection}
\end{wrapfigure}
optimization problems such as differentiable rendering \cite{laine_modular_2020,nimier2019mitsuba,li2018differentiable}. In this problem domain, the target parameters are usually geometric quantities such as vertex positions of a triangle mesh, surface normal vectors, and camera positions. For all of these quantities, the vector form of their gradients encodes important geometric information that are tied to the specific coordinate system in which the problem is embedded. 

Traditional Adam, originally devised for network weight optimization, does not account for the vector nature of gradients in geometry problems. The optimization process is thus not rotation equivariant---a fundamental property expected in practice---and furthermore optimization may be more unstable. Figure~\ref{fig:artifact} shows an example of traditional Adam's component-wise operations creating self-intersections and axis-aligned artifacts, observed in an inverse rendering problem.

When we consider gradients vector-wise, regular Adam's uniform re-scaling does not preserve the norm and the direction of the gradients. In most cases, Adam's algorithm re-scales gradient to the magnitude of $\pm 1$, restricting each optimization step within a trust region of the current parameter defined by the size of the learning rate $\alpha$ \cite{kingma_adam_2017}. 

%In addition, when the gradient magnitude is small compared to the size of $\epsilon$, the effective step size becomes close to zero, which is desirable in a neural network optimization setting when the gradient comes in scalars and small gradient value means more uncertainty in a stochastic optimization process.

This, however, is problematic in a geometry optimization context. When 
all components of a gradient vector are re-scaled to be of magnitude 1,
the gradient vector snaps to the diagonal direction of the coordinate system as shown in Figure~\ref{fig:graddir}, changing its original direction. As shown in Figure~\ref{fig:artifact}, this diagonal bias in early optimization steps easily results in self-intersections near the axis and introduces geometry changes that are hard to correct in subsequent steps.

In geometry optimization problems, such as inverse rendering, self-intersections are a major source of unrecoverable failure. A non-self-intersecting surface forms the well-defined boundary of a solid shape (left in inset).Such a boundary can be displaced so that one part of the shape passes through another part, causing a self-intersection of the mesh (red in inset). Self-intersections are symptoms implying that space is no longer well separated into ``inside'' and ``outside'' (ambiguous regions shown in solid red in inset).

%It also breaks symmetry as in Figure~\ref{fig:laplacian3d}.

% \begin{figure}
%   \includegraphics[width=\linewidth,trim=0.0cm 0.3cm 0cm 1.5cm]{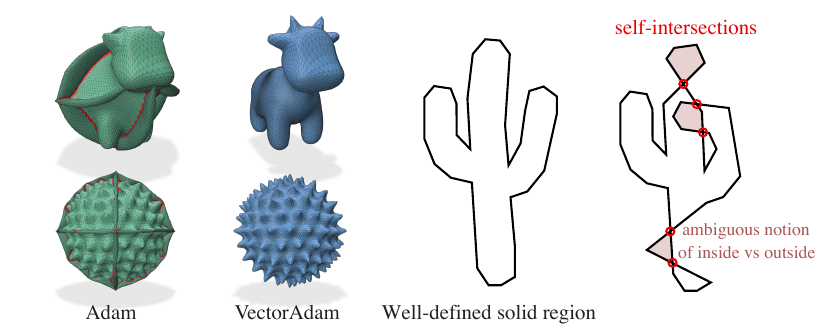}
%   \caption{
%   Geometric optimization results on two sample inputs arising in a differentiable rendering task. Ordinary Adam results in axis-aligned artifacts like self-intersections, highlighted in red, which are resolved by VectorAdam. On the left is an illustration of self-intersections, which happens when a part of the mesh collides with another part of itself.}
%   \label{fig:problems}
% \end{figure}

For example, consider the case where three parameters within $\theta$ form a vertex position $[x,y,z]$ in a geometry optimization problem. Denote its corresponding gradient vector $[g_x, g_y, g_z]$.
The first Adam step $t=1$ after $m, v$ are initialized with zeroes is re-written in terms of this vector-valued quantity as
\begin{equation}
\label{eqn:radam-moment}
    \begin{split}
    [m_x,m_y,m_z] &=  (1-\beta_1) [g_x, g_y, g_z] \\
    [v_x,v_y,v_z] &=  (1-\beta_2) [g_x^2, g_y^2, g_z^2]
    \end{split}
\end{equation}
and the step update for this vertex on the first iteration becomes:
\begin{equation}
\label{eqn:radam-update}
  \begin{split}
  \Delta [x,y,z] &=
  -\alpha \left[
  \left.g_x\middle/(\sqrt{\vphantom{g_y^2}g_x^2}+\epsilon)\right., 
  \left.g_y\middle/(\sqrt{\vphantom{g_y^2}g_y^2}+\epsilon)\right., 
  \left.g_z\middle/(\sqrt{\vphantom{g_y^2}g_z^2}+\epsilon)\right.\right] \\
  &\approx -\alpha \left[ 
  \text{sign}(g_x),
  \text{sign}(g_y),
  \text{sign}(g_z)\right].
  \end{split}
\end{equation}
That is, for any non-vanishing gradient values, a vertex will be updated along some diagonal of the underlying coordinate system ($-\alpha[1,1,1]$, $-\alpha[-1,1,-1]$, etc.). 
This effect creates a bias toward the corners of the coordinate system not just on the first iteration, but also throughout the optimization process (see the histograms in Figure \ref{fig:2d3d} for experimental evidence).
%We also note that the magnitude of the change experienced by this vector valued quantity under Adam exceeds the expecteds learning rate: $\left\| \Delta[x,y,z] \right\| = \alpha \sqrt{1+1+1} \approx \alpha 1.7$.

%The gradient update vector, becomes, 
%\begin{equation}
%\label{eqn:radam-update}
%    \begin{split}
%    \nabla g &= \frac{\alpha[\nabla_x, \nabla_y, \nabla_z]}{\sqrt{[\nabla_x^2, \nabla_y^2, \nabla_z^2] + \epsilon}}\\
%    &\approx \alpha[\pm1, \pm1, \pm1]
%    \end{split}
%\end{equation}
%which pushes the vertex to move along the diagonal directions only, which imposes a coordinate-system bias toward the 45$\degree$ throughout the optimization process.
%

%The Adam-modified gradient vector's norm also becomes greater than the learning rate; for example, in a 2D coordinate system, given original gradients of vector norm 1, the Adam-modified gradient vector's norm, at first optimization step, becomes $\sqrt{1^2+1^2} \approx 1.4$, making the effective step size greater than the specified learning rate $\alpha$. 
%
If a gradient vector indicates how the algorithm should, for example, move a vertex in a 3D coordinate system to reach the optimal shape, the original Adam optimizer thus changes not only the magnitude but also the direction of each step. The optimization process then suffers from the loss of important geometric information encoded by the original gradient vectors' directions. 

\begin{wrapfigure}[9]{r}{0.5\textwidth}
\centering
\includegraphics[width=0.5\textwidth,trim=0cm 0cm 0cm 1.8cm]{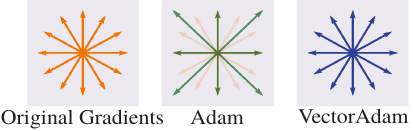}
\caption{Adam's per-scalar adaptive momentum scaling results in uneven vector norm and direction inconsistency when the gradients have vector structures, while our VectorAdam scales the vector norm evenly and preserves the gradient direction.}
\label{fig:graddir}
\end{wrapfigure}
\section{VectorAdam}

We propose VectorAdam, a first-order rotation-equivariant optimizer for geometry optimizations. It tackles the aforementioned flaw of Adam for geometry optimization, by modifying Adam's per-scalar operation to operate on vectors.
As shown in Figure~\ref{fig:graddir}, our proposed VectorAdam rescales the gradients vector-wise while preserving direction, making the optimization process rotation equivariant and stable. 
%% This is repetitive. We already mentioned this in the intro.
%We test VectorAdam on a range of geometry problems including geometry and texture optimization through differentiable rendering, mesh parameterization, as-rigid-as-possible optimization and the classic laplacian smoothing problem, and we show that VectorAdam benifits from the adaptive moment estimation, outperforms traditional gradient descent while being rotation equivariant, which is fundamental in geometry optimization problems.

More specifically, we note that the problems discussed above with the original Adam algorithm stem from the per-scalar calculation of the second moment $v$. VectorAdam modifies the second moment update to be the square of the \emph{vector norm} per gradient vector instead of taking the per-scalar square.
The algorithm then proceeds with the regular bias correction and updates the gradients vector-wise with corresponding products of vector first moments and scalar second moments.
This has a welcome side-effect of requiring a factor of $n$ less memory for storing moments of $n$-dimensional vector parameters.
Without loss of generality a description of VectorAdam is provided in Algorithm~\ref{alg:cap} assuming all parameters in $\theta$ can be arranged as an $m$-long list of $n$-dimensional vectors.
If the optimization involves quantities of varying vector dimension $n$, the algorithm can be applied in tandem to each, noting that for the scalar $n=1$ case VectorAdam reduces to ordinary Adam.

%by $\alpha \cdot \hat{m}_t/(\sqrt{\hat{v}_t} + \epsilon)$. \\

\begin{algorithm}[h!]
\caption{VectorAdam: minimizing $f$ over $r$ parameters, each an $n$-dimensional vector.}
\label{alg:cap}
\begin{algorithmic}
\Require $\alpha$: step size
\Require $\beta_1, \beta_2 \in [0,1)$: Adam's exponential decay rates for the moment estimates
\Require $\epsilon > 0$: Adam's numerical fudge factor
\Require $f(\theta)$: (stochastic) objective function 
\Require $\theta$: Initial parameter vector of shape $r \times n$
\State $m \gets \mathbb{0}$ (Initialize $1^{st}$ moment vector of shape $r \times n$)
\State $v \gets \mathbb{0}$ (Initialize $2^{nd}$ moment vector of shape $r \times 1$)
\State $t \gets 0$ (Initialize timestep)
\While{$\theta_t$ not converged}
\State $t \gets t + 1$
\State $g \gets \nabla_{\theta} f(\theta)$ (Get current gradients of shape $r \times n$)
 \For{$i \in [1, r]$} (Iterate through each vector quantity in rows of $\theta$)
    \State $\vec{m}_i \gets \beta_1 \vec{m}_i + (1-\beta_1) \vec{g}_{i}$
    \State $v_i \gets \beta_2 v_i + (1-\beta_2) \|\vec{g}_{i}\|_2^2$
    \State $\hat{\vec{m}}_i\gets \vec{m}_i / (1-\beta_1^t)$
    \State $\hat{v_i}\gets v_i / (1-\beta_2^t)$
    \State $\vec{\theta}_{i} \gets \vec{\theta}_{i} - \alpha \hat{\vec{m}}_i/(\sqrt{\hat{v}_i} + \epsilon)$
 \EndFor
\EndWhile
\State \textbf{return }{$ \theta$}
\end{algorithmic}
\end{algorithm}

Let's reconsider our 3D vertex example above, applied to the first step of optimization after initializing $\vec{m},v$ to $0$.
With VectorAdam, the first, zero-initialized iteration reduces to
\begin{equation}
\label{eqn:vadam}
    \begin{split}
    [m_x,m_y,m_z] &=  (1-\beta_1) [g_x, g_y, g_z] \\
    v &=  (1-\beta_2) \left\|[g_x, g_y, g_z]\right\|^2  \\
    \end{split}
\end{equation}
The corresponding VectorAdam step update for this vertex becomes
\begin{equation}
\label{eqn:vadam-step}
    \begin{split}
    \Delta [x,y,z] = -\alpha \left. 
    \left[ 
    g_x, g_y, g_z
    \right]  
    \middle/
    \left\| \left[ 
    g_x, g_y, g_z
    \right]  \right\|\right.,
    \end{split}
\end{equation}
that is, the unit-length direction of the vertex's gradient scaled by the learning rate $\alpha$.
Thus, VectorAdam preserves the \emph{direction} of the original gradients while roughly normalizing their \emph{magnitude}. This achieves the vector-wise analogue of the original Adam's action on scalar-wise gradients.

%Since the second moment $v$ applies to all components of vector-valued parameters uniformly, VectorAdam preserves the direction of original gradient vector $\nabla_v$ and normalizes the gradient vector norm to be 1, bounding the effective step size by learning rate $\alpha$ and achieving the same effect vector-wise as what the original Adam does to gradient scalar-wise.

\subsection{Properties of VectorAdam}\label{propertyproof}

The key property which makes VectorAdam suitable for geometric optimization problem is its rotation equivariance: when the input data is rotated, the output of the optimizer will be identical up to the same rotation, at every step of optimization.

To formalize this property, we consider a function $\texttt{VectorAdam}$ which evaluates one update step of Algorithm \ref{alg:cap} on a single parameter vector $\theta$ with  first and second moment terms denoted by $\vec{m}, v$
\begin{equation}
    \vec{\theta}, \vec{m}, v \gets \texttt{VectorAdam}_f(\theta,\vec{m},v)
    \qquad
    \vec{\theta} \in \mathbb{R}^n,  \vec{m} \in \mathbb{R}^n, v \in \mathbb{R}
\end{equation}
for some given rotation invariant objective $f$, and internal optimizer parameters $\alpha, \beta_1, \beta_2, \epsilon$.

\begin{proposition}[VectorAdam is rotation equivariant]
    \label{thm:rotation_equivariance}
    For any rotation $R \in SO(n)$, let
    $$\vec{\theta}^*, \vec{m}^*, v^* \gets \texttt{VectorAdam}_f(\theta,\vec{m},v)
    \quad \textrm{and} \quad
    \vec{\theta}', \vec{m}', v' \gets \texttt{VectorAdam}_f(R \theta,R\vec{m},v).$$ 
    Then \texttt{VectorAdam} satisfies $\vec{\theta}' = R \vec{\theta}^*$, $\vec{m}' = R \vec{m}^*$, and $v' = v^*$. Because each step of \texttt{VectorAdam} is equivariant, the parameters $\vec{\theta}$ after any sequence of optimization steps are equivariant as well.
\end{proposition}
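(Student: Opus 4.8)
The plan is to establish single-step equivariance by pushing the rotation $R$ through each line of Algorithm~\ref{alg:cap}, and then obtain the multi-step statement by a short induction. The only genuinely non-trivial ingredient is how the gradient transforms under rotation; everything else is linearity and norm-preservation.

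First I would record the key lemma: if $f$ is rotation invariant, i.e. $f(R\theta) = f(\theta)$ for all $R \in SO(n)$, then its gradient is rotation equivariant, $\nabla f(R\theta) = R\,\nabla f(\theta)$. This follows by differentiating the identity $f(R\theta) = f(\theta)$ with respect to $\theta$: writing $h(\theta) = f(R\theta)$, the chain rule gives $\nabla h(\theta) = R^\top (\nabla f)(R\theta)$, while $h(\theta)=f(\theta)$ gives $\nabla h = \nabla f$, so $R^\top (\nabla f)(R\theta) = (\nabla f)(\theta)$; since $R \in SO(n)$ satisfies $R^\top = R^{-1}$, rearranging yields the claim. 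I expect this to be the main (indeed the only) obstacle, in the sense that it is the single place where orthogonality of $R$ and the invariance hypothesis on $f$ are actually used.

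Next I would run the two executions of \texttt{VectorAdam} in parallel — one from $(\theta,\vec{m},v)$, the other from $(R\theta, R\vec{m}, v)$, with the same timestep $t$ and the same hyperparameters $\alpha,\beta_1,\beta_2,\epsilon$ — and verify line by line that the primed quantities equal $R$ applied to the starred ones, with $v$ literally unchanged. Writing $g^* = \nabla f(\theta)$ and $g' = \nabla f(R\theta)$, the lemma gives $g' = R g^*$. Then: (i) the first-moment recursion $\vec{m}\mapsto \beta_1\vec{m} + (1-\beta_1)g$ is linear, so it commutes with $R$, giving $\vec{m}' = R\vec{m}^*$; (ii) $\|g'\|_2^2 = \|R g^*\|_2^2 = \|g^*\|_2^2$ since rotations preserve the Euclidean norm, so the second-moment recursion yields $v' = v^*$ exactly; (iii) bias correction multiplies $\vec{m}$ and $v$ by the scalars $1/(1-\beta_1^t)$ and $1/(1-\beta_2^t)$, preserving both relations, so $\hat{\vec{m}}' = R\hat{\vec{m}}^*$ and $\hat{v}' = \hat{v}^*$; (iv) the final update $\vec{\theta}\mapsto \vec{\theta} - \alpha\,\hat{\vec{m}}/(\sqrt{\hat{v}}+\epsilon)$ is affine in $\vec{\theta}$ and $\hat{\vec{m}}$ with a denominator that is a scalar depending only on the shared value $\hat{v}^*$, so $\vec{\theta}' = R\theta - \alpha\, R\hat{\vec{m}}^*/(\sqrt{\hat{v}^*}+\epsilon) = R\vec{\theta}^*$. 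This establishes $\vec{\theta}' = R\vec{\theta}^*$, $\vec{m}' = R\vec{m}^*$, and $v' = v^*$.

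Finally, for the multi-step claim I would induct on the number of optimization steps. The base case is the single-step result just proved. For the inductive step, observe that the output state of one step, namely $(\vec{\theta}^*,\vec{m}^*,v^*)$ versus $(R\vec{\theta}^*,R\vec{m}^*,v^*)$, stands in exactly the ``rotated-input'' relationship required to invoke the single-step result again — and the timestep $t$ has advanced identically in both runs — so this relationship is preserved through every subsequent step. Hence after any sequence of optimization steps the parameters obtained from rotated input equal $R$ applied to the parameters obtained from the original input, which is the asserted equivariance.
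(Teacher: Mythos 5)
Your proposal is correct and follows essentially the same route as the paper's proof: push $R$ through each line of the update using the gradient-equivariance of a rotation-invariant $f$, linearity of the first-moment recursion, norm-preservation for the second moment, and then induct over steps. The only difference is that you explicitly derive the lemma $\nabla f(R\theta) = R\nabla f(\theta)$ via the chain rule and orthogonality of $R$, which the paper simply asserts; this is a welcome addition but not a different argument.
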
 
\begin{proof}
Since $f$ is rotation invariant, its gradient is rotation equivariant, i.e. $\nabla f(R\theta) = R \nabla f(\theta)$.
Substituting in to the update step then shows the equivariance of $\vec{m}$
\begin{equation}
\vec{m}' =  \beta_1 R \vec{m} + (1-\beta_1) R \vec{g} = R (\beta_1 \vec{m} + (1-\beta_1) \vec{g}) = R \vec{m}^*
\end{equation}
and likewise for the invariance of $v$
\begin{equation}
v' =  \beta_2 v + (1-\beta_2) ||R \vec{g}||^2_2 = \beta_2 v + (1-\beta_2) ||\vec{g}||^2_2 = v^*
\end{equation}
and the same trivially holds for their rescaled counterparts $\hat{\vec{m}}' = R\hat{\vec{m}}^*$ and $\hat{v}' = \hat{v}^*$.
This then yields the equivariance of $\vec{\theta}$ as
\begin{equation}
\vec{\theta}' = R \vec{\theta} - \alpha \hat{\vec{m}}' / (\sqrt{\hat{v}'} + \epsilon) = R (\vec{\theta} - \alpha \hat{\vec{m}}^* / (\sqrt{\hat{v}^*} + \epsilon)) = R \vec{\theta}^*.
\end{equation}
Induction on these relations with the base case $\vec{m} = 0$ and $v = 0$ then furthermore implies that the same equivariance holds for any sequence of successive optimization steps by \texttt{VectorAdam}.

\end{proof}

%% This seems to have been lost in the editting:
%Also, note that just as traditional Adam normalizes all updates to have magnitude $\approx 1$, VectorAdam normalizes all vector updates to have magnitude $\approx 1$.

\section{Experiments and Applications} \label{expandapp}
We now compare the performance of Adam and VectorAdam in a range of optimization problems in machine learning and traditional geometric optimization context. These experiments demonstrate the undesirable coordinate-system bias and artifacts caused by Adam's per-coordinate update rules, and show how VectorAdam solves this problem while achieving equivalent or even improved quality and convergence rate. 
All experiments are performed in PyTorch on an NVIDIA RTX 2080 GPU.

\begin{figure}
  \vspace{-1em}
  \includegraphics[width=\linewidth]{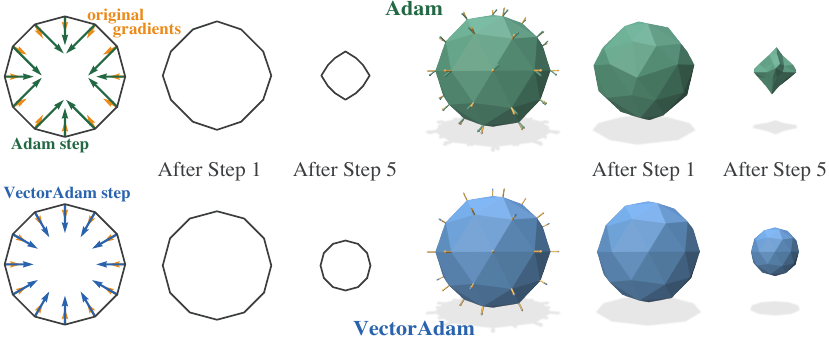}
  \caption{Laplacian regularization on 2D and 3D spherical meshes demonstrates the coordinate system bias of Adam, which changes gradients directions and magnitude breaking the intrinsic symmetry of this problem setup. VectorAdam maintains symmetry.}
  \label{fig:laplacian3d}
\end{figure}

Vector-valued data are especially common in geometric optimization problems, in which the gradients, also structured by vectors, carry important geometric signals for the problem at hand. We consider Laplacian regularization, adverserial descent, deformation energy optimization, and several inverse rendering tasks, all of which leverage first-order gradient-based optimization. Our VectorAdam yields stable and rotation equivariant optimization, and often arrives at a better result by preserving the geometric signals encoded by the gradient vectors.

\paragraph{Laplacian Regularization}
Laplacian regularization on a mesh is widely-used in geometric problems, like 3D reconstruction and generative modeling tasks, especially to discourage mesh self-intersections. Applicable to any mesh or graph problem (e.g., point clouds + $k$ nearest neighbors), this loss term adds a smoothness prior on 3D positions by requiring them to be as similar as possible to their neighrbors'. For surfaces this loss is often a discretized version of Dirichlet energy ($\int \| \nabla x \|^2$), expressed in the form of $vLv^T$ in which $v$ is the mesh vertices and $L$ represents the Laplacian matrix. It is intrinsic and does not depend on the rotation of the shape.

In this experiment, we use both Adam and VectorAdam to directly minimize the Laplacian regularization loss on a mesh. We observe that Adam's per-coordinate moment updates introduces undesirable axis-aligned distortion to the Laplacian regularization. In Figure~\ref{fig:laplacian3d}, Adam's optimization steps shrink the original sphere in 3D, or circle in 2D, to a diamond shape after a few optimization steps, which aligns with the diagonal bias we discussed before. In addition, in Figure~\ref{fig:laprotinv}, we show that Adam's gradient update is not rotation equivariant, resulting in a different updated mesh given a rotated input mesh. In contrast, VectorAdam follows the geometric intuition and shrinks the sphere, or circle, smoothly across the surface in Figure~\ref{fig:laplacian3d} and updates the rotated input mesh identically in Figure~\ref{fig:laprotinv}.

\paragraph{Deformation}

\begin{wrapfigure}[23]{r}{7cm}
\centering
\includegraphics[width=0.5\textwidth, trim=0cm 0.0cm 0cm 0.8cm]{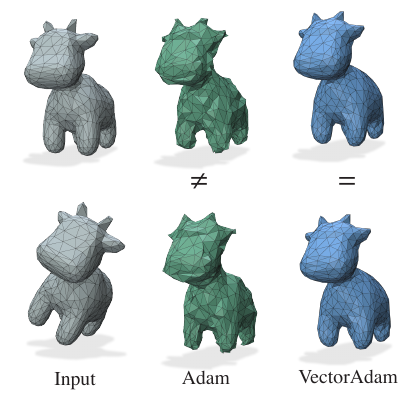}
\caption{We optimize for Laplacian smoothing loss on a canonical mesh and a rotated mesh with both Adam and our VectorAdam.
The output of ordinary Adam is significantly altered by the rotation, while our VectorAdam yields an identical update up to rotation, as expected. }
\label{fig:laprotinv}
\end{wrapfigure}

We also experiment with two common deformation loss optimizations in 2D. These energies can be seen as priors defining the smooth deformation. In particular, we minimize the as-rigid-as-possible energy and the symmetric Dirichlet energy on a deformed 2D mesh given a reference mesh. 

\begin{wrapfigure}[17]{r}{0.5\textwidth}
\centering
\includegraphics[width=0.5\textwidth, trim=0cm 0cm 0cm 0.7cm]{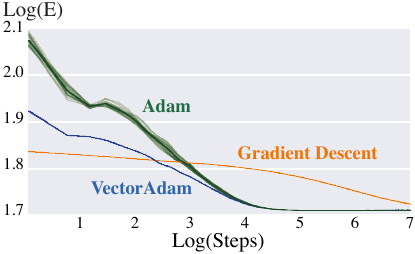}
\caption{Log loss curves with symmetric Dirichlet energy \cite{smith2015bijective} optimization with rotated inputs. We show that our VectorAdam follow the same descent path as desired while Adam's descent path varies across rotated inputs.}
\label{fig:symd}
\end{wrapfigure}

The as-rigid-as-possible energy measures how much a mesh has become distorted relative to an initial ``rest'' configuration.
Given an edge connecting $p_i$ and $p_j$ in the original mesh, $p_i'$ and $p_j'$ in the deformed mesh, the as-rigid-as-possible energy at this edge is defined as $\mathop{\text{min}}_{R_{ij}} ||(p_i-p_j) - R_{ij}(p_i'-p_j')||$ with the best fit rotation $R_{ij} \in SO(3)$. This energy is interesting because it measures the local stretching and squashing of the triangles while factoring out \emph{local} rigid motion \cite{sorkine2007rigid}. It is a commonly-used deformation energy with its detail-preserving property.

We optimize the ARAP energy for a 2D triangle mesh and visualize the per-vertex gradient vectors for the first optimization step in the top left panel of Figure~\ref{fig:2d3d}. Gradient Descent has no coordinate-system bias, but wide range in step scales. If the gradient vector for a particular vertex is $[x\,y]$, then the first VectorAdam step will normalize this to $-\alpha \widehat{[x\,y]}$. Accordingly, these uniform-length steps show no coordinate-system bias.
Meanwhile, the first step of Adam will take a step of $-\alpha [\text{sign}(x) \, \text{sign}(y)]$, thus exhibiting extreme bias to the $45^\circ$ directions.

Symmetric Dirichlet energy is an alternative way of measuring triangle distortions between a deformed mesh and the reference mesh. It is based on a computationally efficient form of isometric distortion metric that uses the eigenvalues of the affine transformation between two triangles. Optimizing the Symmetric Dirichlet energy prevents local folds of triangles. Similar to the Laplacian regularization, the energy is intrinsic to the mesh, and therefore rotation invariant. We optimize for symmetric Dirichlet energy 16 times with the input mesh rotated by a different angle and plot the loss curves in Figure \ref{fig:symd}. As shown in the Figure \ref{fig:symd}, VectorAdam follows the same optimization path regardless of the starting orientation, whereas Adam produces different optimization paths. VectorAdam achieves equivalent convergence rate and final quality as traditional Adam.  

\begin{figure}
  \includegraphics[width=\linewidth,trim=0cm 0cm 0cm 0cm]{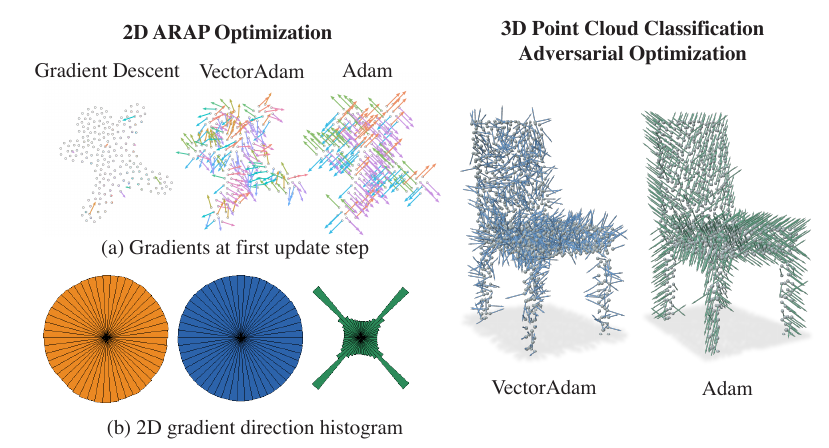}
  \caption[trim=0cm 1cm 0cm 0cm]{On the left, we optimize the ARAP energy for a triangle mesh with 200 vertices for 100 iterations with 1000 randomly rotated initial configurations. (a) shows the gradients at first optimization step from Adam and VectorAdam. (b) shows the histograms of the angular distribution of these 20M gradient directions. Gradient Descent and VectorAdam exhibit their rotation equivariance by presenting a uniform distribution, while Adam presents a biased distribution with spikes at $45^\circ$ angles. On the right in (c), we observe similar coordinate-system bias on a input point cloud, when adverserially optimize inputs to a 3D classifier.}
  \label{fig:2d3d}
\end{figure}

\paragraph{Adversarial Descent}

Similar challenges occur in the context of geometric machine learning. One such example is adversarially optimizing input data to minimize or maximize a classifier's prediction, a process which is vulnerable to Adam's coordinate bias. Here, we trained a point cloud classifier from the rotation-equivariant Vector Neuron architecture \cite{deng_vector_2021} on the ModelNet40 dataset \cite{chang2015shapenet}, and adversarially optimize an input point cloud to minimize the classifcation loss. In Figure \ref{fig:2d3d}, \textit{right}, Adam's gradient updates exhibit the diagonal bias while VectorAdam preserves the directions.

\paragraph{Inverse Rendering}
\begin{wrapfigure}[16]{r}{0.5\textwidth}
\centering
\includegraphics[width=0.5\textwidth,trim=0cm 0cm 0cm 0.4cm]{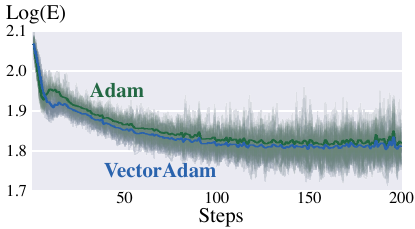}
\caption{Log loss curves for environment map reconstruction with Adam and VectorAdam, over 200 runs. The idea of VectorAdam also benefits optimization of non-geometric quantities that have vector structures such as RGB colors.}
\label{fig:envphong}
\end{wrapfigure}
Research progress in differentiable rendering in recent years has made image-based inverse geometry optimization a popular problem. Within this problem domain, we experiment with shading-based

inverse geometry and texture reconstruction. Both of these problems optimize vector-valued parameters, i.e. vertex positions and RGB color vectors. 

For geometry reconstruction, we run the same experiments as in \cite{nicolet_large_2021}, which optimize a sphere to match against rendering of a Nerfertiti bust. More specifically, we run the optimization 8 times with different learning rates in the range of 9e-6 and 1e-5, and show the Hausdorff distances between the optimized mesh and the target mesh in Figure \ref{fig:nerfertiti} during optimization. Quantitatively, VectorAdam's optimized mesh consistently achieves better, i.e. lower, Hausdorff distance with the reference mesh. Qualitatively, we observe more self-intersecting faces in Adam's optimized mesh as highlighted in red.

We note that an Adam variant called UniformAdam was proposed in \cite{nicolet_large_2021}, which also preserves gradient direction by taking the infinity norm of the second moment estimation. However, UniformAdam is not rotation equivariant as the per-coordinate infinity norm changes under rotation. Our VectorAdam can easily be modified in a similar way while preserving the property of rotation equivariance by taking the infinity norm over gradient vector norms in the second moment estimation. 

\begin{figure}
  \includegraphics[width=\linewidth,trim=0.3cm 0.0cm 0cm 1.3cm]{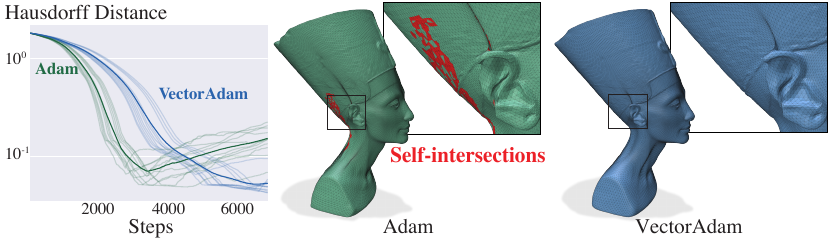}
  \caption{
  For a shading-based inverse reconstruction of the Nefertiti bust, 
  Hausdorff distance between optimized mesh and target mesh across multiple runs with learning rates in the range of [9e-6, 1e-5] shows that VectorAdam consistently outperforms Adam. We observe that qualitatively Adam more often results in tangled surface self-intersections which are difficult to recover from (red), whereas this VectorAdam result does not suffer from this problem.}
  \label{fig:nerfertiti}
\end{figure}

In addition, we also extend our experiment to non-geometric vector-valued data with the problem of texture reconstruction. We run the same texture reconstruction script, as provided by \cite{laine_modular_2020}, 200 times with different initializations. In Figure~\ref{fig:envphong}, we show VectorAdam consistently arrives at better texture result while achieving better convergence rate. This shows that our insight on formulating optimizer by vectors can also benefit other non-geometric problems with vector-valued data. In future work, we are interested in investigating the meaning of rotation in-/equi-variance in color space.

% \begin{wrapfigure}{L}{0.5\textwidth}
% \centering
% \includegraphics[width=0.5\textwidth]{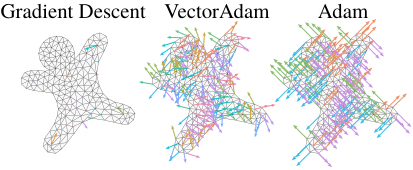}
% \caption{
% We optimize the ARAP energy for a 2D triangle mesh and visualize the per-vertex step vectors for the first optimization step.
% %
% Gradient Descent has no coordinate-system bias, but wide range in step scales.
% %
% If the gradient vector for a particular vertex is $[x\,y]$, then the first VectorAdam step will normalize this to $-\alpha \widehat{[x\,y]}$. Accordingly, these uniform-length steps show now coordinate-system bias.
% %
% Meanwhile, the first step of Adam will take a step of $-\alpha [\text{sign}(x) \, \text{sign}(y)]$, thus exhibiting extreme bias to the $45^\circ$ directions.
% %
% }
% \label{fig:first-step}
% \end{wrapfigure}

% \begin{wrapfigure}{L}{0.5\textwidth}
% \centering
% \includegraphics[width=0.5\textwidth]{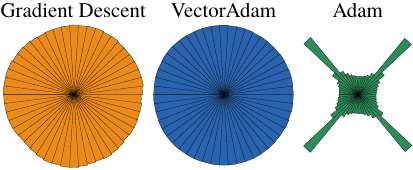}
% \caption{
% We optimized the ARAP energy for a triangle mesh with 200 vertices for 100 iterations with 1000 randomly rotated initial configurations. 
% %
% These histograms show the angular distribution of these 20M gradient directions. Gradient Descent and VectorAdam exhibit their rotation equivariance by presenting a uniform distribution, while Adam presents a biased distribution with spikes at $45^\circ$ angles.
% %
% VectorAdam and Adam out perform Gradient Descent with respect to convergance rate.
% }
% \label{fig:polar-histogram}
% \end{wrapfigure}

\section{Future Work}
\begin{wrapfigure}[22]{r}{0.5\textwidth}
\centering
\includegraphics[width=0.5\textwidth,trim=0cm 0cm 0cm 0.5cm]{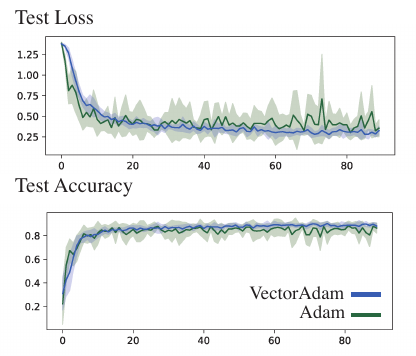}
\caption{Average test loss and accuracy over 10 runs for point cloud segmentation using PointNet, trained with VectorAdam and Adam. The shaded region indicate the area between plus and minus one standard deviation of test losses and accuracies over 10 runs.}
\label{fig:pointnet_test}
\end{wrapfigure}
The above experiments focus mainly on coordinate-valued geometric data, where the properties of VectorAdam are most easily evident and well-founded in theory (Section ~\ref{propertyproof}). However, preliminary experiments also show promise for VectorAdam in the more general context of network weight optimization in geometric machine learning.

A classic task in geometric machine learning is point cloud segmentation. We perform a preliminary experiment using the widely-adopted PointNet architecture \cite{QiSMG17}, in which we train the network using both VectorAdam and conventional Adam on a subset of ShapeNet{\cite{yi2016scalable}} dataset. More precisely, we apply VectorAdam’s equivariant update to the rows of PointNet kernel gradients. For example, consider the kernel weights in the first convolutional layer in the shape of $n\times 3$, which transforms the 3-dimensional point cloud coordinate data into $n$-dimensional latent vectors; our VectorAdam preserves the direction of each row vector in the kernel. In Figure ~\ref{fig:pointnet_test}, we plot the test accuracy and loss throughout the training process, and observed a significant increase in stability using VectorAdam.

These network weights are not coordinate-valued, but can be potentially interpreted geometrically as they map the point coordinates into latent spaces to perform certain geometry-aware tasks such as shape classification and segmentation. Therefore, this preliminary experiment points to the potential benefits of geometry-aware optimization in the broader machine learning context. However, this observed stability is not fully understood in theory, and we hope to investigate it further in the future.

\section{Conclusion}
Our proposed VectorAdam alleviates the identified coordinate-system dependency flaw of Adam.
We proved rotation equivariance of VectorAdam theoretically and demonstrated its implications for popular geometric optimization tasks in geometry processing and machine learning more broadly.

We suspect that networks which take positional coordinate values as input but whose degrees of freedom are layers of weight matrices (e.g., \cite{ParkFSNL19}) may implicitly suffer from the same lack of rotation equivariance.
Unfortunately, our VectorAdam is only well understood for problems where the degrees of freedom are explicitly vector-valued quantities such as positional coordinates.
Neural networks which take such coordinates as input but whose degrees of freedom are layer weight matrices may also suffer from coordinate-dependence bias under the traditional Adam optimizer.
Although our preliminary PointNet experiment shows an increase in test-time stability with VectorAdam, it is not yet clear how the rotational equivariant property of VectorAdam will directly apply to these network weight optimization.

Additionally, the theoretical underpinnings for Adam have primarily been investigated in the stochastic optimization case; some of the direct geometric tasks considered here would benefit from more analysis to understand why Adam and VectorAdam prove so effective.
In this work, we have focused on rotations and low-dimensional problems in geometry processing. In future work, we are further interested in investigating invariances and equivariances to other transformation groups and high-dimensional vector or tensor spaces.

We hope that our attention to rotation equivariance for \emph{optimizers} encourages further development of coordinate-free geometric optimization within the evolving machinery of continuous optimization and deep learning.

\section{Acknowledgements}
Our research is funded in part by the New Frontiers of Research Fund (NFRFE–201), the Ontario Early Research Award program, the Canada Research Chairs Program, a Sloan Research Fellowship, the DSI Catalyst Grant program, Fields Institute for Mathematical Sciences, the Vector Institute for AI, gifts by Adobe Systems. We also thank Wenzheng Chen for helpful discussions, John Hancock for technical support, and the anonymous reviewers for their helpful comments.
%\section*{References}
%References follow the acknowledgments. Use unnumbered first-level heading for
%the references. Any choice of citation style is acceptable as long as you are
%consistent. It is permissible to reduce the font size to \verb+small+ (9 point)
%when listing the references.
%Note that the Reference section does not count towards the page limit.
\medskip

\bibliographystyle{plain} % We choose the "plain" reference style
\bibliography{vectoradam}

\end{document}